\documentclass[runningheads]{llncs}

\usepackage{eccv}

\usepackage{eccvabbrv}

\usepackage{graphicx}
\usepackage{booktabs}

\usepackage[accsupp]{axessibility}  

\usepackage{hyperref}

\hypersetup{colorlinks,linkcolor={blue},citecolor={purple},urlcolor={red}} 

\usepackage{orcidlink}

\usepackage{amsmath, amssymb}
\usepackage{geometry}

\usepackage{xspace}
\usepackage{wrapfig}
\usepackage{algorithm}
\usepackage{algorithmic}

\usepackage{multicol}
\usepackage{multirow}
\usepackage{makecell}
\usepackage{tabularx}
\usepackage{adjustbox}
\usepackage{enumitem}

\usepackage{pifont}
\usepackage{color}
\usepackage{colortbl}

\begin{document}

\title{Your Data Manifold is Secretly a Reward Model:\\ Shell-LCC for Text-to-Video Generation}

\titlerunning{Shell-LCC for Text-to-Video Generation}

\author{Shihao Zhang\inst{1}
\and
Yunzhi Li\inst{1}\and
Yuguang Yan\inst{2} \and
Junzhe Zhang\inst{1} \and
Wei Zhao\inst{1} \and
Bohan Wang\inst{1} \and
Hanwang Zhang\inst{1}}


\authorrunning{S.~Zhang et al.}

\institute{Huawei Central Research Institute \and 
Guangdong University of Technology \\
\url{https://needylove.github.io/Shell-LCC/}}

\maketitle

\begin{abstract}
Recent text-to-video (T2V) diffusion models rely heavily on auxiliary reward signals (e.g., via reward models or DPO) to align generated content with human aesthetics and improve realism. These signals, however, incur substantial computational overhead, require costly human annotations, and often yield limited improvement in fine-grained local details. In this paper, we argue that \textbf{your data manifold is secretly a reward model}. By explicitly modeling the manifold structure of high-quality Supervised Fine-Tuning (SFT) data and encouraging video latents to lie on this manifold,
we derive dense, differentiable, and nearly cost-free reward signals that significantly improve video quality, particularly in mitigating low-level distortions.
Our modeling builds upon Local Coordinate Coding (LCC), which captures the `skeleton' of the manifold. However, directly applying LCC suffers from mean regression, pulling latents toward the geometric mean and losing high-frequency details. We therefore extend it to Shell Local Coordinate Coding (Shell-LCC), which models the manifold `surface' as an isotropic shell to align with the true high-density region.
Experiments demonstrate that our approach improves realism, enhances high-frequency details, reduces over-smoothing artifacts, and alleviates motion blur.

\keywords{Text-to-Video Generation, Manifold Learning, Reward Model, Supervised Fine-Tuning, Geometric Reward}
\end{abstract}


\section{Introduction}
\label{sec:introduction}

Recent advances in Text-to-Video (T2V) generation \cite{wan2025, wang2025discretevisualtokensautoregression} have been driven by large-scale diffusion models \cite{brooks2024video, blattmann2023stable, bar2024lumiere} trained on paired text-video datasets. While these models demonstrate impressive generative capabilities, they still suffer from notable limitations in realism and aesthetics \cite{huang2024vbench, liu2024evalcrafter, bansal2024videophy}, such as blurred objects, physically implausible movements, or an `over-smoothed' texture that betrays their artificial origin. To mitigate these issues,
recent works \cite{liu2025improving, fan2023dpok, lee2023aligning} have explored Reinforcement Learning from Human Feedback (RLHF)--style fine-tuning strategies, including Direct Preference Optimization (DPO) \cite{rafailov2023direct, wallace2024diffusion, liu2025videodpo} and Group Relative Policy Optimization \cite{liu2025flow, xue2025dancegrpo}.
However, these methods rely on costly human annotations or computationally expensive VLM-based reward models. Moreover, the supervision they provide is inherently global and holistic, introducing substantial overhead into the training pipeline while offering little improvement in fine-grained local details.

In this work, we argue that the intrinsic manifold structure of high-quality Supervised Fine-Tuning (SFT) data inherently acts as a cost-free reward model. Because modern Diffusion Transformers (DiTs) \cite{peebles2023scalable, ma2024latte} operate by flattening video latents into spatio-temporal patches, we propose to explicitly model the manifold at this patch level using the SFT data. This patch manifold provides dense, localized reward signals that are highly sensitive to fine-grained local details, ensuring that structural regularities, such as spatial coherence, motion consistency, and texture statistics, are strictly preserved. Consequently, we can enhance video generation simply by penalizing geometric deviations from this patch manifold, eliminating the need for external supervision.

Inspired by manifold learning \cite{roweis2000nonlinear, humayun2024secrets, humayun2024local}, we model the distribution of these high-quality SFT video patches using Local Coordinate Coding (LCC) \cite{yu2009nonlinear}. LCC approximates a non-linear manifold using a set of local anchor points, enabling a robust and continuous representation of the valid latent space. Unlike discrete vector quantization \cite{yan2021videogpt}, LCC approximates data via local linear combinations, ensuring that the learned representation respects the intrinsic geometry of the video patches.

However, it is non-trivial to apply LCC for high-dimensional manifold learning. First, we find that LCC tends to approximate local data neighborhoods through their geometric mean. Such local means reside in low-density regions according to the Gaussian Annulus Theorem \cite{vershynin2018high}, which states that the true probability mass of high-dimensional data concentrates on an outer shell. Second, LCC fails to account for the varying scales of different dimensions, thereby ignoring their diverse importance. To address these issues, we propose a Shell Local Coordinate Coding (Shell-LCC) method that pushes generated samples towards the high-density shell regions and adjusts dimensional importance to derive an isotropic shell of the data manifold. By doing this, we can accurately characterize the high-density regions of the true data distribution.

Building on Shell-LCC, we derive a dense and differentiable geometric reward that encourages generated latent patches to lie on the learned SFT manifold. This geometric constraint effectively improves realism, enhances high-frequency details, reduces over-smoothing artifacts, and alleviates motion blur. It is worth noting that, unlike standard SFT, our reward operates on text prompts and does not require paired text-video. Furthermore, whereas standard SFT enforces a rigid, point-to-point alignment by penalizing deviations from a specific ground-truth video representation, our method imposes a flexible point-to-surface constraint by optimizing the distance to the manifold. This point-to-surface alignment ensures the generated outputs maintain high structural quality.
Our contributions can be summarized as follows:

\begin{itemize}
    \item We demonstrate that the intrinsic manifold of SFT data can serve as a cost-free reward model. By constructing this manifold at the spatio-temporal patch level, we derive dense, differentiable, and nearly cost-free reward signals that eliminate the heavy computational overhead and annotation costs of traditional RLHF methods.
    \item We introduce Shell-LCC, a novel manifold modeling approach. By addressing the mean regression problem of standard LCC and modeling the data surface as an isotropic shell, Shell-LCC successfully prevents the loss of high-frequency structural details.
    \item Extensive experiments on both proprietary and open-source T2V models show that Shell-LCC improves perceptual realism and fine-grained imaging quality while suppressing low-level distortions, all without sacrificing semantic alignment or temporal consistency.
\end{itemize}

\section{Related Work}

\subsection{Text-to-Video Generation}

Diffusion-based models have emerged as the dominant paradigm for Text-to-Video (T2V) generation. Early seminal works successfully extended image diffusion models into the temporal domain by integrating spatio-temporal attention mechanisms and 3D neural architectures \cite{ho2022video, singer2022make, blattmann2023stable}. Recently, the field has witnessed a paradigm shift towards Diffusion Transformers (DiTs) operating within the compressed latent space of video autoencoders (VAEs) \cite{peebles2023scalable, brooks2024video, zheng2024open}, offering unprecedented scalability and generation fidelity. Despite these rapid architectural advancements, current training paradigms predominantly focus on pure data fitting, specifically matching the target data distribution by minimizing the predictive error of noise or velocity trajectories \cite{song2020score, lipman2022flow}. However, these likelihood-based and flow-matching objectives treat the latent representations as residing in a flat Euclidean space, lacking explicit mechanisms for the rigorous construction or geometric regularization of the underlying data manifold. This absence of structural constraint leaves the generation and fine-tuning processes prone to deviating from the true video manifold, particularly when adapting pre-trained models with limited data.

\subsection{Post-Training and Alignment}

Post-training techniques, such as RLHF and DPO, have become essential for aligning generative models with human expectations \cite{christiano2017deep, rafailov2023direct, zhao2023slic, munos2024nash}. While these methods have shown promise in improving visual quality and text-video alignment \cite{huang2024vbench, han2025video, sun2025t2v}, they typically rely on external reward models or heuristic metrics that are prone to bias, noise, and reward hacking. Furthermore, such alignment processes often require substantial preference data, which is difficult to curate for specialized video domains. In contrast, our approach redefines the alignment objective through a geometric lens: by deriving guidance signals directly from a constructed data manifold, it acts as an intrinsic regularization mechanism that keeps the model within the high-density regions of the data distribution. This avoids the instability of external supervision and provides robust, self-supervised alignment signals during the supervised fine-tuning (SFT) phase, especially in data-constrained scenarios \cite{brooks2024video, wang2025lavie, kondratyuk2023videopoet, bar2024lumiere}.

\subsection{Manifold Learning in Representation Spaces}

While classical manifold learning methods, particularly LCC, effectively preserve intrinsic geometric structures via translation-invariant local approximations \cite{roweis2000nonlinear, tenenbaum2000global, belkin2003laplacian, yu2009nonlinear, saul2003think}, their direct integration as geometric regularizers in the SFT of Diffusion Transformers remains unexplored. Recent studies further demonstrate that explicit manifold constraints significantly enhance the stability and expressivity of deep generative models \cite{de2022riemannian, arvanitidis2017latent, humayun2024secrets, humayun2024local, dai2021manifold, ni2022manifold}, yet bridging this gap toward DiT fine-tuning still requires tailored structural guidance. Beyond generative modeling, a related line of work structures the representation space for discriminative tasks such as deep regression~\cite{zhang2023improving, zhang2025improving}, among which topological and manifold-based regularization~\cite{zhang2024deep} in particular resonates with our geometric perspective. In this work, we utilize the local linearity of LCC to provide explicit manifold guidance, ensuring the model captures the fine-grained spatio-temporal structures essential for video generation. Furthermore, distinct from traditional LCC approaches, we leverage the Gaussian Annulus Theorem to approximate data specifically within high-probability outer shell regions, enabling the generation of realistic samples with rich high-frequency details.

\section{Preliminaries}

\subsection{Latent Video Diffusion Models}
Text-to-Video (T2V) generation is commonly formulated within the framework of Latent Diffusion Models (LDMs).
Given a video in the pixel space $\mathbf{x} \in \mathbb{R}^{T \times H \times W \times 3}$, where $T$, $H$, and $W$ denote the number of frames, height, and width, a pre-trained 3D autoencoder $\mathcal{E}$ first compresses $\mathbf{x}$ into a lower-dimensional latent space, yielding $\mathbf{z}_0 = \mathcal{E}(\mathbf{x}) \in \mathbb{R}^{T' \times H' \times W' \times C}$, with $T'$, $H'$, $W'$ the downsampled spatio-temporal dimensions and $C$ the number of latent channels.

Instead of treating $\mathbf{z}_0$ merely as a dynamic sequence of frames, modern transformer-based LDMs (e.g., DiT) process this latent volume as a dense representation comprising a sequence of spatio-temporal patches. We flatten and project $\mathbf{z}_0$ into a set of dense latent patches $\mathbf{Z}_0 = \{z_{0,1}, z_{0,2}, \dots, z_{0,N}\}$, where $N$ is the total number of patches and $z_{0,i} \in \mathbb{R}^d$ captures the rich local spatio-temporal semantics.

The diffusion process progressively adds Gaussian noise to the latent representation over a diffusion timestep $\tau \in [0, 1]$. The forward process is defined as:
\begin{equation}
    q(\mathbf{Z}_\tau | \mathbf{Z}_0) = \mathcal{N}(\mathbf{Z}_\tau; \sqrt{\bar{\alpha}_\tau}\mathbf{Z}_0, (1-\bar{\alpha}_\tau)\mathbf{I})
\end{equation}
where $\bar{\alpha}_\tau$ dictates the noise schedule. During training, a neural network $\epsilon_\theta$, typically a transformer, is trained to predict the added noise $\epsilon$ conditioned on text prompts $\mathbf{c}$. The standard SFT objective is the mean squared error of the noise prediction:
\begin{equation}
    \mathcal{L}_{SFT} = \mathbb{E}_{\mathbf{Z}_0, \epsilon, \tau, \mathbf{c}} \left[ \| \epsilon - \epsilon_\theta(\mathbf{Z}_\tau, \tau, \mathbf{c}) \|_2^2 \right]
\end{equation}
Following Flow Matching and Rectified Flows \cite{lipman2022flow, liu2022flow, esser2024scaling},
the parameterization is often reformulated from noise prediction to velocity prediction, where the network learns a time-dependent vector field $v_{\theta}(\mathbf{Z}_\tau, \tau, \mathbf{c})$ that describes the transport direction along a predefined interpolation path between data and noise distributions.

While this objective effectively matches the marginal distribution of the SFT data, it inherently operates on individual noise perturbations and struggles to explicitly regularize the intrinsic geometric structure of these dense representations, leading to potential blurring or out-of-distribution artifacts during generation. Consequently, the learned model may produce samples that are statistically plausible yet geometrically inconsistent with the underlying data manifold.

\subsection{Local Coordinate Coding (LCC)}
To geometrically regularize the T2V generation process, we hypothesize that the dense latent patches $z_i \in \mathbb{R}^d$ of high-quality, realistic videos reside on a low-dimensional non-linear manifold $\mathcal{M} \subset \mathbb{R}^d$. To explicitly model this structure, we employ LCC \cite{yu2009nonlinear}.

Unlike global linear methods such as PCA, LCC approximates the non-linear manifold by learning a dictionary of local anchor points (bases) $\mathcal{B} = \{b_1, b_2, \dots, b_M\}$, where $b_m \in \mathbb{R}^d$. For any given dense patch representation $z \in \mathcal{M}$, LCC seeks a sparse coordinate vector $\gamma \in \mathbb{R}^M$ to reconstruct $z$ as a linear combination of its locally adjacent anchors. The LCC objective $\mathcal{L}_{\text{LCC}}$ is formulated as:
\begin{equation}
\label{eq:lcc}
    \min_{\mathcal{B}, \gamma} \sum_{i=1}^N \left( \left\| z_i - \sum_{m=1}^M \gamma_{i,m} b_m \right\|_2^2 + \lambda \sum_{m=1}^M |\gamma_{i,m}| \| z_i - b_m \|_2^2 \right),
\end{equation}
subject to the shift-invariant constraint $\sum_{m=1}^M \gamma_{i,m} = 1$, where $\gamma_{i,m}$ is the $m$-th coordinate for patch $z_i$, and $\lambda > 0$ is the regularization parameter controlling the locality.
In our implementation, we parameterize the coordinates with a softmax function, which additionally enforces $\gamma_{i,m} \geq 0$ and thus yields a convex combination of nearby anchors. This design stabilizes optimization and is commonly used in deep local coding frameworks.

The first term of Eq.~\ref{eq:lcc} represents the reconstruction error. The crucial second term is the \textbf{locality constraint}: it penalizes the $\ell_1$ norm of the coding weighted by the squared Euclidean distance between the data point $z_i$ and the anchor $b_m$. This ensures that $\gamma_{i,m}$ is non-zero only for anchors $b_m$ that are geometrically close to $z_i$.

Consequently, the localized linear approximation $\hat{z} = \sum_{m=1}^M \gamma_{m} b_m$ defines a local tangent space around $z$ on the manifold $\mathcal{M}$. 
This representation provides a reconstruction of manifold-consistent points and enables measuring the distance of a patch to the manifold.

\section{Methodology}

In this section, we construct our differentiable geometric reward framework. We first present a theoretical analysis showing why standard manifold approximations lead to mean regression and loss of high-frequency details in dense representations. To resolve this, we propose Shell Local Coordinate Coding (Shell-LCC) and derive the corresponding dense differentiable geometric reward.

\subsection{The Mean Regression Dilemma in Local Coordinate Coding}
\label{sec:mean_regression}

While LCC effectively captures non-linear geometric structure, directly using it as a reconstruction constraint in generative modeling introduces an inherent over-smoothing effect. We formalize this phenomenon as a \textit{mean regression} property of convex local reconstruction.

\begin{lemma}[Variance Decomposition]
\label{lemma:variance}
Assume $\gamma_m \geq 0$ for all $m$ and $\sum_{m=1}^M \gamma_m = 1$, and let $\hat{\mathbf{z}} = \sum_{m=1}^M \gamma_m \mathbf{b}_m$ be the reconstructed point. The locality penalty term decomposes as:
\begin{equation}
    \sum_{m=1}^M \gamma_m \|\mathbf{z} - \mathbf{b}_m\|^2 = \|\mathbf{z} - \hat{\mathbf{z}}\|^2 + \sum_{m=1}^M \gamma_m \|\mathbf{b}_m - \hat{\mathbf{z}}\|^2
\end{equation}
\end{lemma}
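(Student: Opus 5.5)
The plan is to expand each squared distance around the reconstruction $\hat{\mathbf{z}}$, the usual bias--variance (parallel-axis) identity for a weighted point cloud. For each $m$ we write
\begin{equation*}
\mathbf{z} - \mathbf{b}_m = (\mathbf{z} - \hat{\mathbf{z}}) - (\mathbf{b}_m - \hat{\mathbf{z}}),
\end{equation*}
and expand the squared norm:
\begin{equation*}
\|\mathbf{z} - \mathbf{b}_m\|^2 = \|\mathbf{z} - \hat{\mathbf{z}}\|^2 - 2\langle \mathbf{z} - \hat{\mathbf{z}}, \mathbf{b}_m - \hat{\mathbf{z}}\rangle + \|\mathbf{b}_m - \hat{\mathbf{z}}\|^2 .
\end{equation*}

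Next we multiply by $\gamma_m$ and sum over $m$, treating the three terms separately. In the first term the factor $\|\mathbf{z}-\hat{\mathbf{z}}\|^2$ does not depend on $m$. It can therefore be pulled out, and the constraint $\sum_m \gamma_m = 1$ turns it into exactly $\|\mathbf{z}-\hat{\mathbf{z}}\|^2$. The third term already has the required form $\sum_m \gamma_m \|\mathbf{b}_m - \hat{\mathbf{z}}\|^2$.

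The only step with content is showing that the cross term vanishes. By linearity of the inner product,
\begin{equation*}
\sum_{m=1}^M \gamma_m \langle \mathbf{z} - \hat{\mathbf{z}}, \mathbf{b}_m - \hat{\mathbf{z}}\rangle = \Big\langle \mathbf{z} - \hat{\mathbf{z}}, \sum_{m=1}^M \gamma_m \mathbf{b}_m - \Big(\sum_{m=1}^M \gamma_m\Big)\hat{\mathbf{z}} \Big\rangle .
\end{equation*}
Since $\sum_m \gamma_m \mathbf{b}_m = \hat{\mathbf{z}}$ and $\sum_m \gamma_m = 1$, the second argument is $\hat{\mathbf{z}} - \hat{\mathbf{z}} = 0$, so the cross term is zero. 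Combining the three terms gives the claimed decomposition.

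The nonnegativity assumption $\gamma_m \ge 0$ is not used in the algebra. Its role is to make $|\gamma_m| = \gamma_m$, so that the left-hand side coincides with the locality penalty in Eq.~\ref{eq:lcc}. It also makes the second term on the right a genuine nonnegative weighted variance of the anchors around $\hat{\mathbf{z}}$, which is what the subsequent mean-regression argument relies on.
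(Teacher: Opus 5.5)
Your proof is correct and follows the paper's argument: expand $\|\mathbf{z}-\mathbf{b}_m\|^2$ around $\hat{\mathbf{z}}$, use $\sum_m\gamma_m=1$ on the first term, and kill the cross term via $\sum_m \gamma_m(\mathbf{b}_m-\hat{\mathbf{z}})=0$. Your remark is also accurate: $\gamma_m\ge 0$ is not needed for the identity itself, only to match the $|\gamma_m|$ in the locality penalty.
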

\begin{proof}
Expanding the left-hand side:
\begin{align*}
    \sum_{m=1}^M \gamma_m \|\mathbf{z} - \mathbf{b}_m\|^2
    &= \sum_{m=1}^M \gamma_m \left( \|\mathbf{z} - \hat{\mathbf{z}}\|^2 - 2 \langle \mathbf{z} - \hat{\mathbf{z}},\, \mathbf{b}_m - \hat{\mathbf{z}} \rangle + \|\mathbf{b}_m - \hat{\mathbf{z}}\|^2 \right).
\end{align*}
Since $\sum_{m=1}^M \gamma_m = 1$, we have $\sum_{m=1}^M \gamma_m (\mathbf{b}_m - \hat{\mathbf{z}}) = \hat{\mathbf{z}} - \hat{\mathbf{z}} = 0$, so the cross-term vanishes.
\end{proof}

\begin{theorem}[Mean Regression in LCC]
Under the assumption of $\gamma_m(z) \ge 0,\ \sum_m \gamma_m(z) = 1$, minimizing the LCC objective in Eq.~\ref{eq:lcc} is equivalent to minimizing
\begin{align}
\min_{\mathcal{B}, \gamma} \sum_{i=1}^N \left(\left(1 + \lambda\right) \|\mathbf{z}_i - \hat{\mathbf{z}}_i\|^2 + \lambda \sum_{m=1}^M \gamma_{i,m} \|\mathbf{b}_m - \hat{\mathbf{z}}_i\|^2 \right)
\end{align}
\end{theorem}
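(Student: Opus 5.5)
The plan is to show that, for every admissible choice of $(\mathcal{B},\gamma)$, the per-sample summand of Eq.~\ref{eq:lcc} coincides term by term with the per-sample summand of the rewritten objective. If the two objectives agree as functions on the whole feasible set, they have the same minimizers and the same optimal value, and the claimed equivalence follows.

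First I would fix a sample $i$ and use the hypotheses $\gamma_{i,m}\ge 0$ and $\sum_m \gamma_{i,m}=1$. Nonnegativity gives $|\gamma_{i,m}|=\gamma_{i,m}$, so the absolute values in the locality term of Eq.~\ref{eq:lcc} disappear. The locality term becomes
\[
\lambda\sum_{m=1}^M \gamma_{i,m}\|\mathbf{z}_i-\mathbf{b}_m\|^2 .
\]
The reconstruction term is $\|\mathbf{z}_i-\hat{\mathbf{z}}_i\|^2$ with $\hat{\mathbf{z}}_i=\sum_m\gamma_{i,m}\mathbf{b}_m$. This matches the definition of the reconstructed point in Lemma~\ref{lemma:variance}, applied with $\mathbf{z}=\mathbf{z}_i$ and coefficients $\gamma_{i,\cdot}$.

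Next I would invoke Lemma~\ref{lemma:variance}, whose assumptions are exactly the convexity constraints above. It rewrites the locality sum as
\[
\|\mathbf{z}_i-\hat{\mathbf{z}}_i\|^2+\sum_m\gamma_{i,m}\|\mathbf{b}_m-\hat{\mathbf{z}}_i\|^2 .
\]
Multiplying by $\lambda$ and adding the reconstruction term gives
\[
(1+\lambda)\|\mathbf{z}_i-\hat{\mathbf{z}}_i\|^2+\lambda\sum_m\gamma_{i,m}\|\mathbf{b}_m-\hat{\mathbf{z}}_i\|^2 .
\]
Summing over $i$ yields the rewritten objective identically.

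The only step that needs care is the constraint set. Eq.~\ref{eq:lcc} is stated with only the affine constraint $\sum_m\gamma_{i,m}=1$, while the identity uses $\gamma_{i,m}\ge0$. I would therefore state explicitly that both problems are posed over the simplex-constrained feasible set, as the theorem assumes and as the softmax parameterization enforces. On that set the two objectives are equal pointwise. Equivalence of the minimization problems is then immediate, and no argument about optimizers is needed.
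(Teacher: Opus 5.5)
Your proposal is correct and follows the paper's proof exactly: use $\gamma_{i,m}\ge 0$ to replace $|\gamma_{i,m}|$ by $\gamma_{i,m}$, apply Lemma~\ref{lemma:variance} to each sample's locality term, collect the $(1+\lambda)$ coefficient, and sum over $i$. Your explicit remark that both problems must be posed over the simplex-constrained set, rather than only the affine constraint stated with Eq.~\ref{eq:lcc}, is a useful clarification that the paper leaves implicit.
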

\begin{proof}
Let $\mathcal{L}(\mathcal{B}, \gamma)$ denote the LCC objective in Eq.~\ref{eq:lcc}. Under $\gamma_{i,m} \ge 0$ and $\sum_m \gamma_{i,m} = 1$ (so $|\gamma_{i,m}| = \gamma_{i,m}$), applying Lemma~\ref{lemma:variance} to the locality term gives:
\begin{align}
    \mathcal{L}(\mathcal{B}, \gamma)
    &= \sum_{i=1}^N \left( \|\mathbf{z}_i - \hat{\mathbf{z}}_i\|^2 + \lambda \sum_{m=1}^M \gamma_{i,m} \|\mathbf{z}_i - \mathbf{b}_m\|^2 \right) \\
    &= \sum_{i=1}^N \left( \|\mathbf{z}_i - \hat{\mathbf{z}}_i\|^2 + \lambda \left(\|\mathbf{z}_i - \hat{\mathbf{z}}_i\|^2 + \sum_{m=1}^M \gamma_{i,m} \|\mathbf{b}_m - \hat{\mathbf{z}}_i\|^2\right) \right) \\
    &= \sum_{i=1}^N \left((1 + \lambda)\|\mathbf{z}_i - \hat{\mathbf{z}}_i\|^2 + \lambda \sum_{m=1}^M \gamma_{i,m} \|\mathbf{b}_m - \hat{\mathbf{z}}_i\|^2 \right)
\end{align}
\end{proof}

Consequently, the optimization simultaneously:

\begin{enumerate}
\item pulls $\hat{\mathbf{z}}_i$ toward the data point $\mathbf{z}_i$ via reconstruction error;
\item pulls $\hat{\mathbf{z}}_i$ toward the weighted centroid of anchors by minimizing local variance.
\end{enumerate}

Therefore, the optimal reconstruction is necessarily a compromise between these two forces, resulting in a systematic shrinkage of $\hat{\mathbf{z}}_i$ toward the local anchor mean. This establishes the mean regression effect of LCC. As a result, directly enforcing LCC reconstruction in generative training inevitably introduces over-smoothed latent representations, leading to blurred textures and loss of fine details.

\subsection{The `Empty Core' Paradox in Convex Reconstruction}
\label{sec:empty_core}

To counteract the mean regression effect of LCC, it is crucial to understand the true geometric structure of high-dimensional latent representations.
In high-dimensional spaces, probability mass does not concentrate near the mean, but instead concentrates on a thin ``shell'' at a distance from the mean. This phenomenon is formalized by the Gaussian Annulus Theorem.

\begin{theorem}[Gaussian Annulus Theorem \cite{vershynin2018high}]
Let $x \sim \mathcal{N}(0, I_d)$ in $\mathbb{R}^d$. Then for any $\epsilon \in [0,\sqrt{d}]$:
\[
\Pr\Big(
\big| \|x\| - \sqrt{d} \big| \ge \epsilon 
\Big)
\le 2 e^{-c \epsilon^2 }
\]
for some universal constant $c > 0$.
\end{theorem}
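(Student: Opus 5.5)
We will prove the Gaussian Annulus Theorem in two steps: first show that $\|x\|^2$ concentrates around $d$, then pass from the square to the norm itself.

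\textbf{Step 1: concentration of $\|x\|^2$.} Write $\|x\|^2 - d = \sum_{i=1}^d (x_i^2 - 1)$. The summands are independent, centered, and sub-exponential, since $x_i^2$ is the square of a standard Gaussian. The cleanest route is a direct Chernoff computation with the moment generating function
\[
\mathbb{E}\, e^{t(x_i^2-1)} = \frac{e^{-t}}{\sqrt{1-2t}}, \qquad t<\tfrac12,
\]
whose logarithm is at most $t^2/(1-2t)$. Optimizing over $t$ in the resulting Chernoff bound gives the Bernstein-type inequality
\[
\Pr\big(|\|x\|^2 - d| \ge u\big) \le 2\exp\!\big(-c_0 \min(u^2/d,\, u)\big)
\]
for a universal constant $c_0>0$. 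Alternatively, one can simply invoke Bernstein's inequality for sums of independent sub-exponential variables.

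\textbf{Step 2: from the square to the norm.} We use the elementary implication
\[
\big|\|x\|-\sqrt d\big| \ge \epsilon \;\Longrightarrow\; \big|\|x\|^2-d\big| \ge \max\big(\epsilon\sqrt d,\ \epsilon^2\big).
\]
To see this, write $|a^2-d| = |a-\sqrt d|\,(a+\sqrt d)$ for $a\ge 0$, and note two facts:
\begin{itemize}
\item $a+\sqrt d \ge \sqrt d$;
\item $a+\sqrt d \ge |a-\sqrt d|$.
\end{itemize}
Now set $u=\max(\epsilon\sqrt d,\epsilon^2)$. Since $\epsilon\le\sqrt d$, we have $u=\epsilon\sqrt d$. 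Then
\[
\min(u^2/d,\, u) = \min(\epsilon^2,\ \epsilon\sqrt d) = \epsilon^2,
\]
again because $\epsilon\le\sqrt d$. Substituting into Step 1 yields $\Pr(|\|x\|-\sqrt d|\ge\epsilon)\le 2e^{-c_0\epsilon^2}$, which is the claim with $c=c_0$.

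\textbf{Main obstacle.} The only substantive step is the sub-exponential bound in Step 1, specifically the MGF estimate $-t-\tfrac12\log(1-2t)\le t^2/(1-2t)$ and the choice of $t$. I would take $t=\min\big(u/(4d),\,1/4\big)$ and bound each regime separately. The lower tail is easier: for $t<0$ we have $\log\mathbb{E}\,e^{t(x_i^2-1)}\le t^2$, which gives a pure sub-Gaussian tail. The restriction $\epsilon\le\sqrt d$ is exactly what keeps us in the sub-Gaussian regime of Bernstein's bound.
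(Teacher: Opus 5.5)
Your proof is correct. The paper gives no proof of its own and simply cites Vershynin, and your argument is essentially the standard one there: Bernstein's inequality for $\|x\|^2-d=\sum_i(x_i^2-1)$, followed by the implication $\bigl|\|x\|-\sqrt d\bigr|\ge\epsilon\Rightarrow\bigl|\|x\|^2-d\bigr|\ge\max(\epsilon\sqrt d,\epsilon^2)$.

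Two small remarks. First, the log-MGF bound $t^2/(1-2t)$ holds only for $t\ge 0$ and fails for small $t<0$, so your separate $t^2$ bound for the lower tail is genuinely needed. Second, the hypothesis $\epsilon\le\sqrt d$ is superfluous: with $u=\max(\epsilon\sqrt d,\epsilon^2)$ one has $\min(u^2/d,u)=\epsilon^2$ for every $\epsilon\ge 0$.
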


\begin{corollary}[Zero Probability Mass at the Mean]
\label{cor:empty_core}
If the local data distribution in a $d$-dimensional dense representation space can be approximated by an isotropic Gaussian, then the probability mass within any small spherical neighborhood around the local centroid $\mu$ approaches zero exponentially as $d$ increases.
\end{corollary}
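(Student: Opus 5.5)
We apply the Gaussian Annulus Theorem to a standardized version of the local distribution. Assume the local data around the centroid $\mu$ is modeled as $\mathbf{z} \sim \mathcal{N}(\mu, \sigma^2 I_d)$ for some local scale $\sigma>0$, and set $x = (\mathbf{z}-\mu)/\sigma \sim \mathcal{N}(0, I_d)$. A ``small spherical neighborhood'' of the centroid is a ball $B(\mu, r)$ of radius $r$ that stays fixed (or grows more slowly than $\sigma\sqrt{d}$) as $d$ increases. In standardized coordinates the event $\mathbf{z}\in B(\mu,r)$ is exactly $\|x\| \le \rho$ with $\rho = r/\sigma$.

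The key step is an inclusion of events. Whenever $\rho < \sqrt{d}$,
\begin{equation*}
\{\|x\|\le \rho\} \subseteq \big\{\, \big|\|x\|-\sqrt{d}\big| \ge \sqrt{d}-\rho \,\big\}.
\end{equation*}
Take $\epsilon = \sqrt{d}-\rho$. This lies in $[0,\sqrt{d}]$ once $d \ge \rho^2$, which is the admissible range of the theorem. The Gaussian Annulus Theorem then gives
\begin{equation*}
\Pr\big(\mathbf{z}\in B(\mu,r)\big) \le 2\exp\!\big(-c(\sqrt{d}-\rho)^2\big).
\end{equation*}
For fixed $\rho$, the exponent is $c\,d - 2c\rho\sqrt{d} + c\rho^2 = c\,d(1-o(1))$. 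The bound therefore decays like $e^{-c' d}$ for any $c'<c$ and all large $d$. This is the claimed exponential decay. The same bound holds uniformly for any radius with $\rho \le (1-\delta)\sqrt{d}$, $\delta\in(0,1)$, giving $2e^{-c\delta^2 d}$; this makes ``small'' precise as ``a fixed fraction below the typical radius $\sigma\sqrt{d}$''.

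As a sanity check, or an alternative route that avoids the universal constant, one can bound the probability directly. We have $\Pr(\|x\|\le\rho) \le \mathrm{vol}(B_\rho)\,(2\pi)^{-d/2}$, since the Gaussian density is at most $(2\pi)^{-d/2}$. Using $\mathrm{vol}(B_\rho) = \pi^{d/2}\rho^d/\Gamma(d/2+1)$ and Stirling's formula, this bound is of order $(e\rho^2/d)^{d/2}$, which is even super-exponentially small for fixed $\rho$.

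The main obstacle is not the computation but making the informal statement precise. We must fix what ``approximated by an isotropic Gaussian'' and ``small neighborhood'' mean, in particular how $\sigma$ and $r$ may scale with $d$. We must also state that the approximation error is small enough not to destroy an exponentially small bound. I would handle this by stating the corollary under exact Gaussianity with $r/\sigma \le (1-\delta)\sqrt{d}$, and remarking that any approximation in total variation distance $\eta$ adds at most $\eta$ to the bound.
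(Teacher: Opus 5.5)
Your argument is correct and is essentially the paper's proof. Like the paper, you standardize to $x\sim\mathcal{N}(0,I_d)$ (the paper only translates, taking unit covariance), use the inclusion $\{\|x\|\le\rho\}\subseteq\{|\|x\|-\sqrt{d}|\ge\sqrt{d}-\rho\}$, and apply the Gaussian Annulus Theorem with $\epsilon=\sqrt{d}-\rho$ to get $2e^{-c(\sqrt{d}-\rho)^2}$; your extras (general scale $\sigma$, the explicit $e^{-c'd}$ rate, the uniform bound for $\rho\le(1-\delta)\sqrt{d}$, and the density-times-volume check) are valid refinements the paper does not spell out.
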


\begin{proof}
Let $x \sim \mathcal{N}(\mu, I_d)$ in $\mathbb{R}^d$, and consider a small spherical neighborhood of radius $r>0$ around the centroid $\mu$:
\[
B(\mu, r) = \{ x \in \mathbb{R}^d : \|x-\mu\| \le r \}.
\]
Translate to the origin: define $y = x - \mu \sim \mathcal{N}(0, I_d)$. Then
\[
\Pr(x \in B(\mu, r)) = \Pr(\|y\| \le r).
\]
For any fixed $r>0$ and sufficiently large $d$ (so that $r < \sqrt{d}$), set $\epsilon = \sqrt{d} - r > 0$. Then
\[
\|y\| \le r = \sqrt{d} - \epsilon \implies |\|y\| - \sqrt{d}| \ge \epsilon.
\]
Applying the Gaussian Annulus Theorem gives
\[
\Pr(\|y\| \le r) \le \Pr\!\left(|\|y\| - \sqrt{d}| \ge \epsilon\right) \le 2e^{-c(\sqrt{d}-r)^2}.
\]
As $d \to \infty$ with fixed $r$, the right-hand side decays exponentially, so $\Pr(x \in B(\mu, r)) \to 0$.
\end{proof}

The Gaussian Annulus Theorem and this corollary
imply that as dimensionality increases, almost all probability mass lies within a thin shell:
\[
\|x\| \approx \sqrt{d},
\]
while the region near the mean has exponentially small probability.

\textbf{Implication for latent representations.} Although latent features are not strictly Gaussian, high-dimensional embeddings are well known to exhibit approximately Gaussian local statistics due to central limit effects and concentration of measure. Therefore, within a local neighborhood, latent patches behave similarly: their probability mass concentrates on a thin shell around the local mean rather than at the mean itself. Consequently, the true data distribution is effectively hollow in the radial direction.

\subsection{Shell Local Coordinate Coding (Shell-LCC)}
\label{sec:shell_lcc}

To accurately capture the intrinsic geometry of the SFT data manifold, we must move beyond the central ``skeleton'' approximated by standard LCC. Properly modeling this structure requires addressing two fundamental properties of high-dimensional latent spaces: the dimensional imbalance of the features and the high-dimensional concentration of measure.

\textbf{Isotropic Calibration.} The latent dimensions of $z \in \mathbb{R}^d$ are not strictly isotropic; they encode features with varying scales and semantic importance. Standard LCC relies on unweighted Euclidean distance, which inherently biases the optimization towards high-variance dimensions while neglecting fine-grained textures. To calibrate this, we map the local geometry into an isotropic space using a diagonal Mahalanobis distance. Let $\sigma = [\sigma_1, \dots, \sigma_d]^\top \in \mathbb{R}_{>0}^d$ be a learnable scale vector. For a latent patch $z$ and its local linear reconstruction $\hat{z}$, the dimension-scaled residual is defined as:
\begin{equation}
    \tilde{z} = \Sigma^{-\frac{1}{2}}(z - \hat{z})
\end{equation}
where $\Sigma = \text{diag}(\sigma_1^2, \dots, \sigma_d^2)$. This scaling ensures the manifold approximation respects low-variance but structurally crucial features equally.

\textbf{The Shell Constraint.} Even with an isotropically scaled space, standard LCC encourages $\|\tilde{z}\|_2 \to 0$, pulling the patch towards the local mean. However, according to the Gaussian Annulus Theorem, the true probability mass in high-dimensional spaces concentrates on a thin outer shell rather than the geometric center. To preserve spatial variance and align strictly with this high-density region, we restrict the latent codes to reside on a spherical shell of radius equal to 1 (representing a unit standard deviation boundary). The distance penalty is formulated as:
\begin{equation}
    \mathcal{L}_{\text{shell\_dist}} = (\|\tilde{z}\|_2 - 1)^2
\end{equation}

\textbf{Overall Objective.} Minimizing $\mathcal{L}_{\text{shell\_dist}}$ alone introduces a trivial solution: rather than learning an accurate manifold representation, the model could simply manipulate the scale vector $\sigma$ to artificially satisfy $\|\tilde{z}\|_2 = 1$ for any poor reconstruction, rendering the geometric constraint meaningless. To prevent the variances from growing unconstrained to compensate for large reconstruction errors, we introduce a logarithmic regularizer. This mathematically aligns our objective with minimizing the negative log-likelihood of a shell-structured multivariate Gaussian, forcing the model to learn a tight and structurally faithful envelope. The complete Shell-LCC objective is:
\begin{equation}
    \mathcal{L}_{\text{Shell-LCC}} = \mathcal{L}_{\text{LCC}} + \tau_1 \mathcal{L}_{\text{shell\_dist}} + \tau_2 \sum_{i=1}^d \log \sigma_i
\end{equation}

By optimizing this objective, the learnable $\sigma$ automatically calibrates dimensional importance, while the shell constraint actively preserves high-frequency structural details, effectively curing the mean regression issue. We empirically set $\tau_1 = 1$ and $\tau_2 = 0.1$.

\subsection{Dense Differentiable Geometric Rewards}
\label{sec:rewards}

With the Shell-LCC effectively modeling the true geometry of the SFT data, we freeze it and derive a differentiable reward signal to fine-tune the T2V model. Let $z_{gen}$ be a dense latent patch generated by the model, and $\hat{z}_{gen} = \sum_{m=1}^M \gamma_m b_m$ be its local linear reconstruction, i.e., its projection onto the learned Shell-LCC manifold.

Typically, the generated patch $z_{gen}$ exhibits a significantly larger orthogonal distance from its reconstruction $\hat{z}_{gen}$ than real patches do. This occurs because the SFT anchors $b_m$ only span the subspace of valid, high-quality video structures. Consequently, any out-of-distribution artifacts, physically implausible movements, or low-level noise present in $z_{gen}$ cannot be accurately reconstructed. This un-spannable residual explicitly isolates the generative distortions we aim to penalize.

To ensure spatial fidelity, one might intuitively minimize this distance to zero. However, as dictated by the Gaussian Annulus Theorem, the exact reconstruction $\hat{z}_{gen}$ represents the local geometric mean, which resides in a low-density region. Continuously minimizing the distance to zero would force $z_{gen}$ directly into this `empty' center, inevitably inducing severe mean regression and yielding blurred, over-smoothed videos.

To penalize artifacts while preserving high-frequency sharpness, we only pull $z_{gen}$ inward until it reaches the high-density manifold shell. Using our calibrated Mahalanobis space, the distance reward is formulated to minimize the deviation from the Shell-LCC manifold:
\begin{equation}
    R_{dist}(z_{gen}) =  
    \left\| \Sigma^{-\frac{1}{2}}(z_{gen} - \hat{z}_{gen}) \right\|_2^2
\end{equation}
minimizing $R_{dist}$ ensures spatial fidelity without inducing the blurring effect associated with standard mean-regression penalties.

A major advantage of our Shell-LCC formulation is that the geometric reward $R_{dist}$ is fully differentiable with respect to the dense representation $z_{gen}$. This eliminates the need for high-variance reinforcement learning algorithms (e.g., PPO) or auxiliary reward models, allowing for direct gradient backpropagation into the T2V diffusion backbone.

\section{Experiments}

This section presents a comprehensive evaluation of our approach. We first validate Shell-LCC's ability to capture the underlying manifold structure of SFT data, followed by its effectiveness in improving T2V realism and aesthetics. 
We mainly conduct experiments using our private 4.5B T2V model on our private SFT data, and also evaluate on the publicly available Wan-T2V-1.3B model \cite{wan2025} with the UltraVideo dataset \cite{ultravideo}.

\begin{figure}[t]
    \centering
    \begin{minipage}[c]{0.52\textwidth}
        \centering
        \includegraphics[width=0.95\linewidth]{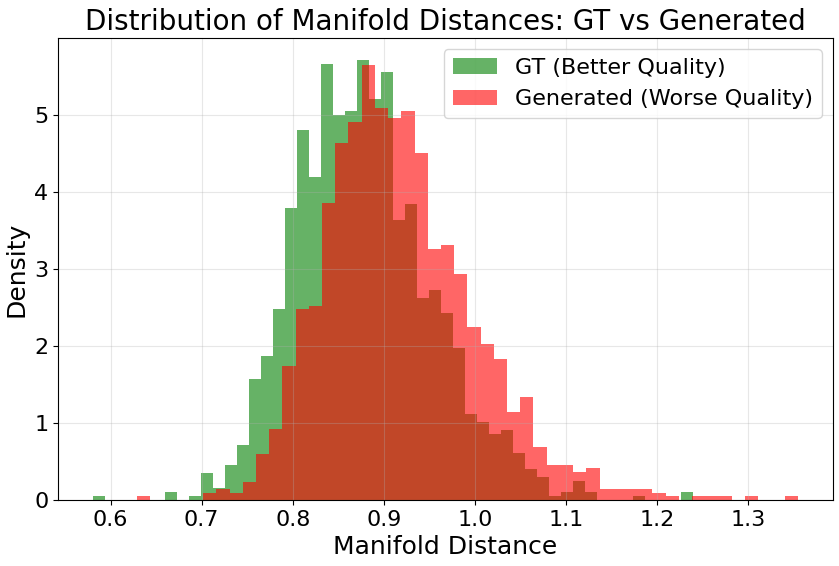}
        \caption{Distribution of the manifold distance $R_{dist}$ for Ground Truth and generated videos. Generated videos show a clear distribution shift, indicating manifold drift.}
        \label{fig:manifold_dist}
    \end{minipage}\hfill
    \begin{minipage}[c]{0.45\textwidth}
        \centering
        \makeatletter\def\@captype{table}\makeatother 
        \caption{Manifold distinguishability on the test set. We report the Accuracy (Acc \%) where $R_{dist}(z_{gt}) < R_{dist}(z_{gen})$.}
        \label{tab:distinguish_acc}
        \resizebox{\linewidth}{!}{
        \begin{tabular}{lcc}
            \toprule
            \textbf{Method} & \textbf{Epoch} & \textbf{Acc (\%)} \\
            \midrule
            LCC & 10  & 92.1 \\
            LCC & 100 & 92.3 \\
            LCC & 200 & 92.7 \\
            \midrule
            Recon Only  & 200 & 83.2 \\
            Shell-LCC  & 200 & 91.4 \\
            \bottomrule
        \end{tabular}
        }
    \end{minipage}
\end{figure}

\subsection{Shell-LCC Training and Geometric Structure Analysis}

\textbf{Shell-LCC Configuration and Training Details.}
Our Shell-LCC is configured with a learnable basis matrix $B \in \mathbb{R}^{M \times d}$ containing $M = 4096$ local anchor points, which are initialized by randomly selecting SFT video patches $z$. The coordinate predictor is implemented as a 3-layer Multi-Layer Perceptron (MLP) with ReLU activations, culminating in a Softmax layer to enforce sparsity in the local coordinate assignments. The learnable scale vector $\sigma$ is also implemented as a 3-layer MLP. 
During the optimization phase, the model is trained using the AdamW optimizer with a learning rate of $1 \times 10^{-3}$. 

\textbf{Manifold Distance and Distinguishability of $R_{dist}$.}
To verify that Shell-LCC accurately models the SFT data manifold and that $R_{dist}$ provides a meaningful reward signal, we evaluate its ability to distinguish between high and low-quality videos. We split our SFT dataset into 2,500 training and 1,501 test samples. Each pair comprises a high-quality ground-truth (GT) video and a relatively lower-quality video generated by our baseline T2V model trained on the SFT dataset.

A valid manifold representation should consistently yield $R_{dist}(z_{\text{gt}}) < R_{dist}(z_{\text{gen}})$. As shown in Table~\ref{tab:distinguish_acc}, the distinguishability accuracy of LCC steadily increases across epochs, reaching 92.7\% at Epoch 200, while Shell-LCC reaches 91.4\%. This confirms that both LCC (where $R_{dist} = \|z - \hat{z}\|_2^2$) and Shell-LCC successfully learn the intrinsic manifold structure over time, thereby validating $R_{dist}$ as an effective geometric reward signal to penalize generative distortions.

\textbf{Analyzing the Distinguishability Trade-off.} 
Interestingly, the discriminative accuracy of Shell-LCC is marginally lower than that of LCC. This directly results from our isotropic calibration. Standard LCC uses an unweighted Euclidean distance dominated by high-variance dimensions, making generated latents with massive trivial errors trivially easy to classify. By introducing the scale vector $\sigma$ to normalize the residual distance, Shell-LCC dampens these high-variance dimensions. Although this sacrifices a fraction of naive binary accuracy, it strictly enforces equal penalization of low-variance, fine-grained structural details, thereby prioritizing true geometric fidelity over trivial variance exploitation.

\textbf{The importance of the local constraint in LCC.}
To demonstrate the importance of the local constraint in LCC (i.e., $\sum_{m=1}^M |\gamma_{i,m}| \| z_i - b_m \|_2^2$ in Eq. \ref{eq:lcc}), we train the manifold only with the standard reconstruction loss (i.e., $z_i - \hat{z}_i$). As presented in the bottom row of Table~\ref{tab:distinguish_acc}, its ability to distinguish between GT and generated videos significantly degrades ($-9.5\%$), although the absolute reconstruction error can be minimized to a near-zero value. This phenomenon is highly intuitive: prioritizing pure reconstruction allows the model to degenerate into a trivial identity mapping. Consequently, the latent space collapses, and the model merely memorizes the input without learning the meaningful, low-dimensional geometric structure of the video manifold.

\textbf{Empirical Evidence of Shell Geometry.} To validate our hypothesis, we analyze the normalized radial distances of high-quality (real) and low-quality (generated) latent patches. As shown in Fig.~\ref{fig:manifold_dist}, we observe a distinct separation in their radial statistics: real samples average at $0.880 \pm 0.076$, while generated samples average at $0.918 \pm 0.083$.  This reveals three critical phenomena:
\begin{itemize}
    \item \textbf{Non-zero Concentration:} Real samples cluster at a stable, non-zero radius rather than the local mean, contradicting standard LCC assumptions but strongly aligning with the Gaussian Annulus Theorem.
    \item \textbf{Thin Manifold Shell:} The tight radial variance ($0.076$) confirms that realistic video latents are confined to a narrow geometric band.
    \item \textbf{Generative Drift:} Generated samples systematically exhibit larger distances and variances, quantitatively reflecting their deviation from the true data manifold.
\end{itemize}
Although the exact empirical radius is slightly below $1.0$ due to the logarithmic regularizer, the existence of this stable, discriminative shell provides solid justification for our point-to-surface reward design.

\textbf{Hyperparameter Sensitivity.} Shell-LCC is insensitive to the anchor count $M$ and loss weights $\tau_1, \tau_2$. As shown in Table~\ref{tab:ablation_hyperparams}, $M{=}2048$ performs on par with $4096$, and varying $\tau_1$ or $\tau_2$ by two orders of magnitude changes accuracy by less than $0.3\%$. This robustness to $M$ is expected: the anchors only need to capture the coarse manifold ``skeleton'' rather than provide dense coverage, while the learnable scale $\sigma$ further enriches the representation. Moreover, the patch-level formulation feeds a large number of patches into a low-dimensional ($16$-d) manifold, so point-wise anchor precision is unnecessary; patch-level gradient noise cancels out over batched updates, yielding a stable optimization direction.

\begin{table}[htbp]
\centering
\scriptsize
\setlength\tabcolsep{4pt}
\caption{Ablation on Shell-LCC hyperparameters and data scale (\#Videos). Bold denotes our default setting.}
\label{tab:ablation_hyperparams}
\resizebox{\linewidth}{!}{
\begin{tabular}{l|ccc|ccc|ccc|ccc}
\toprule
& \multicolumn{3}{c|}{$M$} & \multicolumn{3}{c|}{$\tau_1$} & \multicolumn{3}{c|}{$\tau_2$} & \multicolumn{3}{c}{\#Videos} \\
& 2048 & \textbf{4096} & 8192 & 0.1 & \textbf{1.0} & 10 & 0.01 & \textbf{0.1} & 1 & 100 & 1000 & \textbf{2500} \\
\midrule
Acc (\%) & 90.8 & 91.4 & 91.8 & 91.3 & 91.4 & 91.1 & 91.1 & 91.4 & 91.3 & 90.1 & 91.4 & 91.4 \\
\bottomrule
\end{tabular}
}
\end{table}

\textbf{Data Scale.} Our patch-wise manifold extracts 230,400 patches from a single 5-second video, so even a small video collection already provides a large number of training samples. As shown in Table~\ref{tab:ablation_hyperparams}, 100 videos achieve 90.1\% Acc, while 1,000 videos already reach 91.4\%---matching the full 2,500-video set.

\textbf{Time and memory consumption.} Our Shell-LCC model is highly lightweight, containing only 2.5M parameters, a mere 0.2\% of the Wan-T2V-1.3B model and 0.056\% of our 4.5B SFT model. As a result, it introduces near-zero additional latency and memory overhead.

\subsection{Effectiveness of the Manifold Reward}


\begin{table*}[tp]
    \centering
    \caption{Quantitative comparison on VBench; higher is better for all metrics. Shell-LCC significantly reduces visual distortion (Imaging Quality) without trading off global aesthetics, semantic alignment, or temporal consistency.}
    \label{tab:vbench}
    \renewcommand{\arraystretch}{1.2}
    \setlength\tabcolsep{8.0pt}
    \resizebox{1.0\linewidth}{!}{
        \begin{tabular}{lcccccc}
        \toprule[0.1em]
        \makecell[l]{Models} & \makecell[c]{Aesthetic Quality} & \makecell[c]{Imaging Quality}
        & \makecell[c]{Overall Consistency}
        & \makecell[c]{Motion Smoothness}
        & \makecell[c]{Subject Consistency}\\
        \midrule
        Our SFT Baseline & 0.6724 $\pm$ 0.0030 & 0.7509 $\pm$ 0.0007& 0.2637 $\pm$ 0.0020 & 0.9891 $\pm$ 0.0004 & 0.9680 $\pm$ 0.0005   \\
        Our SFT Baseline + DPO & 0.6784 $\pm$ 0.0019 & 0.7397 $\pm$ 0.0020 & 0.2677 $\pm$0.0009 & 0.9840 $\pm$ 0.0006 & 0.9637 $\pm$ 0.0005  \\   
        Our SFT Baseline + Shell-LCC (Ours) & 0.6735 $\pm$ 0.0047 & 0.7631 $\pm$ 0.0059 & 0.2670 $\pm$ 0.0027  & 0.9900 $\pm$ 0.0003 & 0.9682 $\pm$ 0.0003  \\       
        \midrule
        Wan-T2V-1.3B \cite{wan2025} & 0.6244 $\pm$ 0.0101& 0.6629 $\pm$ 0.0101 & 0.2271 $\pm$ 0.0070& 0.9848 $\pm$ 0.0023& 0.9654 $\pm$ 0.0023   \\ 
        Wan-T2V-1.3B + Shell-LCC (Ours) & 0.6253 $\pm$ 0.0097 & 0.7037 $\pm$ 0.0345& 0.2292 $\pm$ 0.0059 & 0.9871 $\pm$ 0.0034 & 0.9715 $\pm$ 0.0034    \\ 
        \midrule
        UltraWan-T2V-1.3B \cite{ultravideo} & 0.5744 $\pm$ 0.0122 & 0.6756 $\pm$ 0.0043& 0.2288 $\pm$ 0.0022 & 0.9657 $\pm$ 0.0024  & 0.9225 $\pm$ 0.0007  \\
        UltraWan-T2V-1.3B + Shell-LCC (Ours) & 0.6299 $\pm$ 0.0086 & 0.7396 $\pm$ 0.0121& 0.2236 $\pm$ 0.0052& 0.9918 $\pm$ 0.0049& 0.9658 $\pm$ 0.0099  \\
        \bottomrule[0.1em]
        \end{tabular}
    }
\end{table*}

\textbf{Experimental Setup.}
To evaluate the efficacy of the proposed manifold reward, we conduct experiments on both our proprietary 4.5B SFT model and prominent open-source architectures. For the proprietary model, the Shell-LCC is trained using an internal SFT dataset consisting of 4,000 curated high-quality video clips characterized by complex spatio-temporal dynamics. To demonstrate the generalizability of our approach, we further evaluate the publicly available Wan-T2V-1.3B \cite{wan2025} and UltraWan-T2V-1.3B \cite{ultravideo} models. For these, the Shell-LCC is trained on the UltraVideo dataset \cite{ultravideo} (42,000 clips at 4K/8K resolution). Finally, we compare the Shell-LCC manifold reward against Direct Preference Optimization (DPO) using paired preference data from VideoGen-RewardBench \cite{liu2025improving}. 
We update the T2V model solely via the manifold reward, omitting the base diffusion loss.
Early stopping is triggered when the manifold penalty drops to $\sim$90\% of its initial value ($\sim$200 iterations).
It is worth mentioning that our reward mechanism decouples optimization from fixed text-video pairs: by fine-tuning with arbitrary prompts and no ground-truth videos, Shell-LCC enables latent interpolation and extrapolation, generalizing robustly beyond the SFT data distribution. This makes it especially valuable for larger models, where curating paired SFT data is the dominant bottleneck.
All models are trained and evaluated at a resolution of 720p.

\textbf{Evaluation Metrics.} 
We assess our method along three complementary axes from the VBench suite. For visual fidelity, we use \emph{Aesthetic Quality} (global composition and photorealism) and \emph{Imaging Quality} (low-level distortions such as blur, noise, and exposure artifacts). For semantic alignment, we report \emph{Overall Consistency} (video--text consistency). For temporal stability, we report \emph{Motion Smoothness} and \emph{Subject Consistency}. Imaging Quality serves as our primary geometric proxy; we hypothesize that if latent representations are successfully regularized toward the intrinsic manifold shell, the resulting decoded frames will exhibit significantly fewer low-level artifacts.

\textbf{Quantitative Results.} Table \ref{tab:vbench} summarizes our quantitative evaluations. Applied to our SFT baseline, Shell-LCC yields a notable 1.22\% absolute gain in Imaging Quality, reflecting enhanced low-level fidelity through the effective reduction of blur, noise, and motion artifacts. Concurrently, this reduction in low-level distortion mitigates the over-smoothed, artificial textures symptomatic of AI-generated videos, recovering realistic, fine-grained details. Crucially, Shell-LCC significantly reduces visual distortion without trading off global aesthetics; whereas DPO slightly improves Aesthetic Quality at the direct expense of degraded Imaging Quality (dropping to 73.97\%), our geometric constraint preserves local sharpness alongside a stable Aesthetic Quality (67.35\%).
Similar trends observed across the Wan-T2V and UltraWan architectures confirm the robust generalizability of our approach (see Fig.~\ref{fig:open_source} for qualitative results).

\textbf{Collaborative effects.} 
Shell-LCC and methods like DPO operate at \textbf{different granularities} and are \textbf{orthogonal in semantic alignment}: DPO uses video-level supervision for global semantic/stylistic alignment; Shell-LCC uses dense patch-level rewards to refine local structure without altering content. Tab.~\ref{tab:vbench} confirms this: DPO improves Aesthetic Quality (+0.60\%) but \emph{sacrifices} Imaging Quality ($-1.12\%$), while Shell-LCC targets exactly these distortions, improving Imaging Quality by +1.22\% and +4.08\% on our model and Wan-T2V-1.3B, respectively. They are therefore complementary.

\textbf{Semantic alignment and temporal consistency.}
\emph{Semantic alignment is preserved.} Shell-LCC refines local details with minimal impact on content (Fig.~\ref{fig:qualitative_comp}), so semantics are not sacrificed. An A/B test by an independent evaluation group confirms this: Shell-LCC significantly outperforms the SFT baseline on realism (score 108 vs.\ 100), while semantic alignment is unchanged (101 vs.\ 100).
\emph{Temporal consistency is improved.} 
Our patches are \textbf{spatio-temporal volumes}, so temporal coherence is embedded in the manifold definition.
Although the reward is patch-wise, gradients aggregate over a full batch, applying the same geometric constraint uniformly to spatial and temporal dimensions. Tab.~\ref{tab:vbench} confirms gains in overall consistency, motion smoothness, and subject consistency.

\subsection{Qualitative Analysis}

We provide visual comparisons to better understand the geometric effect of Shell-LCC. Several consistent phenomena are observed. More visualizations are provided in the supplementary material.

\begin{figure}[t]
    \centering
    \includegraphics[width=0.95\linewidth]{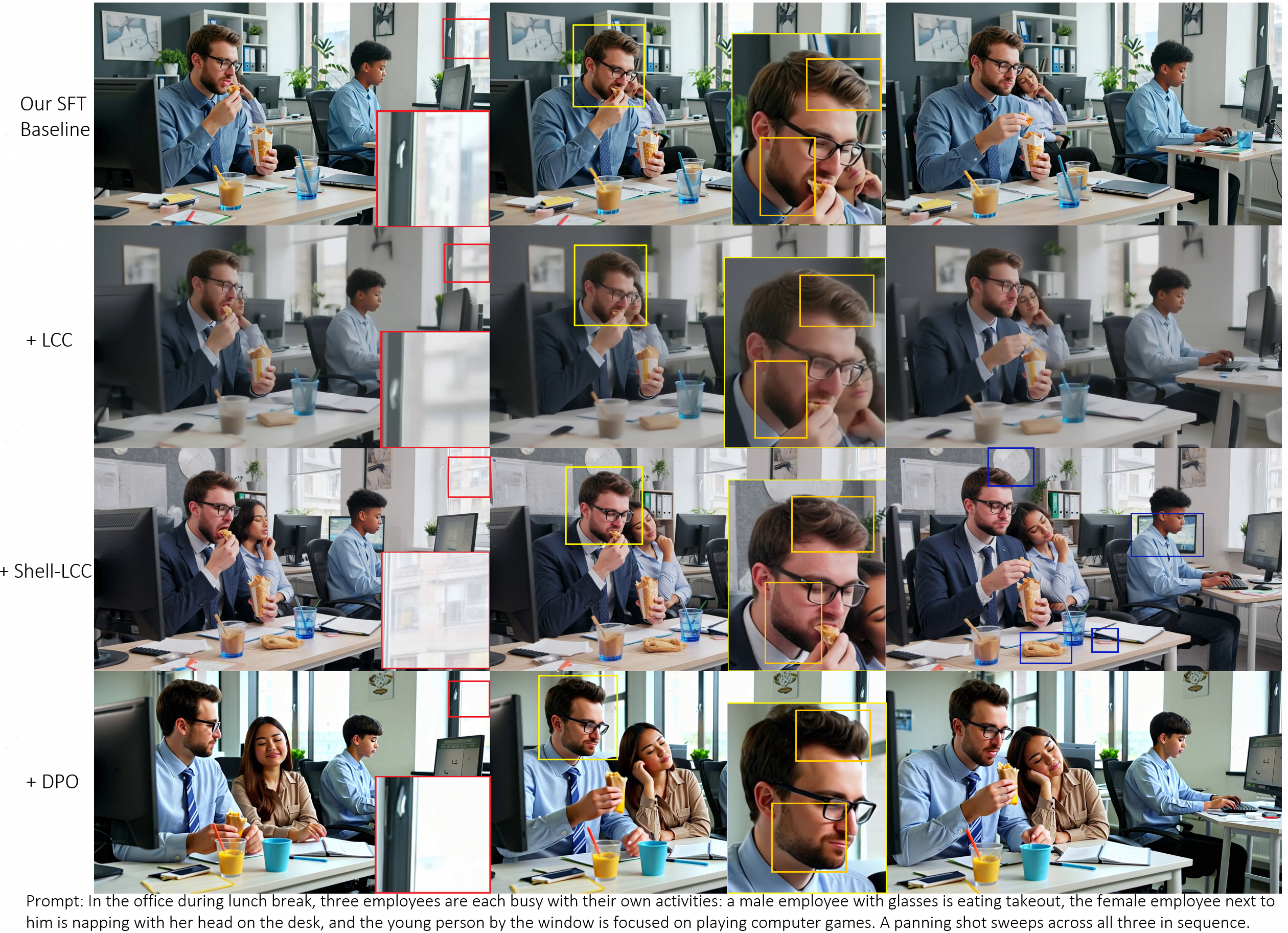}
    \caption{Qualitative comparison of generated videos. Methods from top to bottom: SFT baseline, +LCC, +Shell-LCC, and +DPO. As shown, adding Shell-LCC significantly improves video quality by mitigating low-level distortions. 
    Specifically, it yields cleaner backgrounds with richer high-frequency details (red boxes) and recovers realistic, fine-grained textures instead of the over-smoothed ``plastic'' look common in AI generation (yellow boxes). Blue boxes further highlight its ability to synthesize complex scenes with intricate local details. Compared to DPO, Shell-LCC better preserves dense visual structures, benefiting from dense reward signals.
    Additionally, unlike LCC, which is prone to mean regression and loses high-frequency information, our method successfully retains these details.}
    \label{fig:qualitative_comp}
\end{figure}

\textbf{High-frequency restoration for background clarity.}
As highlighted in the red boxes in Fig.~\ref{fig:qualitative_comp}, Shell-LCC significantly restores high-frequency information that is typically lost or blurred in SFT and LCC baselines. By providing dense, patch-level supervision, our method effectively deblurs complex backgrounds, transforming hazy textures into sharp, identifiable visual structures. This results in a much higher level of global-local consistency across the entire frame.

\textbf{Mitigation of over-smoothing for realistic textures.}
The yellow boxes in Fig.~\ref{fig:qualitative_comp} demonstrate that Shell-LCC alleviates the `plastic' or over-smoothed artifacts symptomatic of AI-generated videos. Unlike DPO, which can lead to uniform and artificial surfaces, Shell-LCC encourages the synthesis of realistic, fine-grained micro-textures. This ensures that object surfaces, such as skin, fabric, or natural elements, retain their authentic visual roughness and detail.

\textbf{Synthesis of complex scenes and local intricacy.}
As illustrated in the blue boxes in Fig.~\ref{fig:qualitative_comp}, Shell-LCC demonstrates a stronger capability to synthesize complex scenes with a higher density of local objects and fine-grained details. While baseline models often simplify cluttered environments or suppress small structures, the dense reward signals in Shell-LCC encourage the model to preserve the structural integrity of local components. As a result, the generated videos exhibit richer visual content and improved precision in representing intricate scene elements.

\begin{figure}[t]
    \centering
    \includegraphics[width=0.95\linewidth]{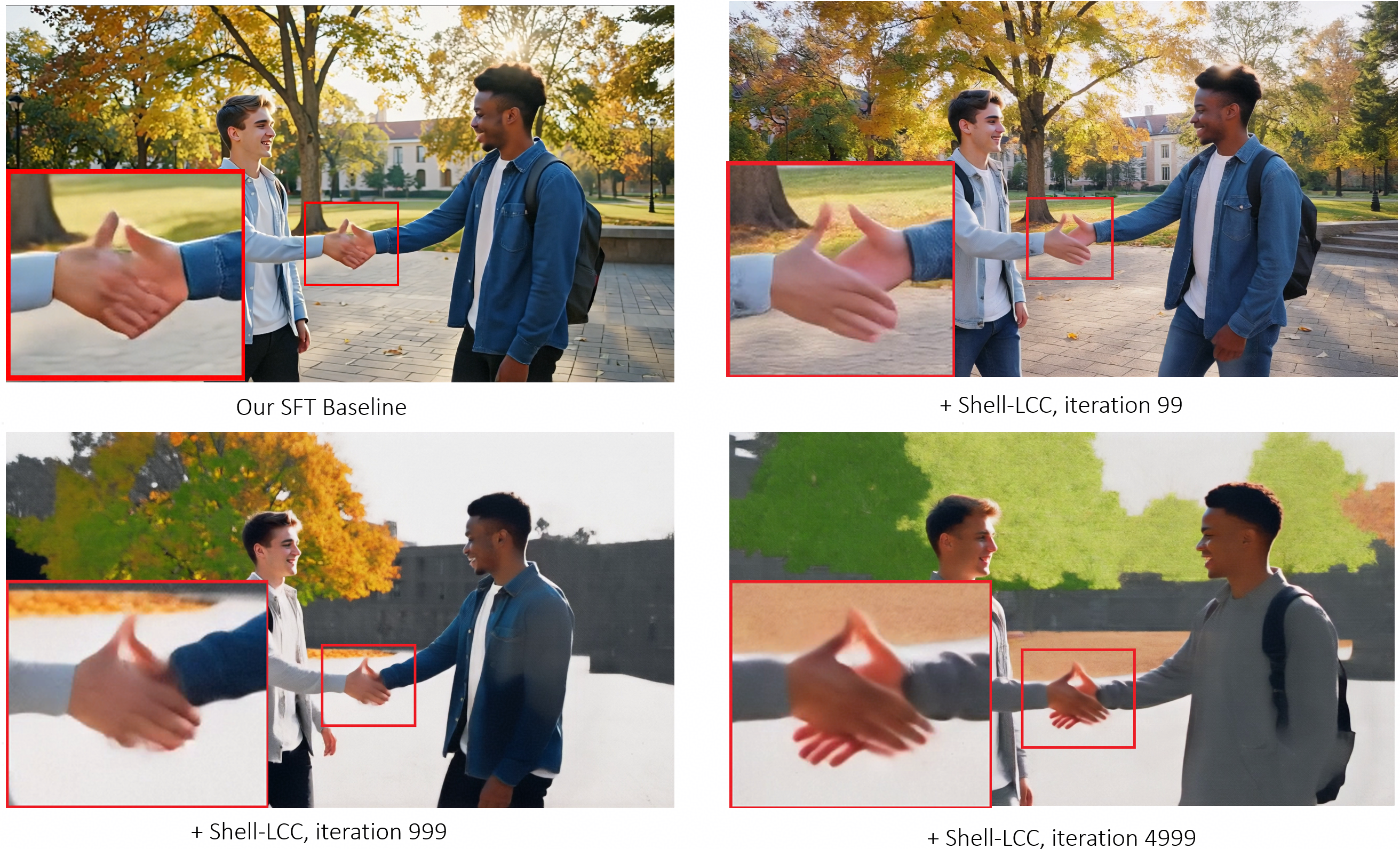}
    \caption{Evolution of motion deblurring and training stability. While Shell-LCC reduces the motion blur observed in the baseline, prolonged training (e.g., at iter. 4999) triggers a regression towards the mean ($\hat{z}$), leading to model collapse. This trade-off highlights the inherent conflict between eliminating out-of-manifold information (blur) and preserving high-frequency details, necessitating controlled optimization.}
    \label{fig:learning}
\end{figure}

\textbf{Controlled deblurring behavior and mean regression.}
As shown in Fig.~\ref{fig:learning}, Shell-LCC exhibits a distinct, controllable deblurring behavior across training iterations. Compared to the SFT baseline, which generates highly blurred motion during the handshake, our method (iter. 99 to 4999) gradually enhances local structure. However, continuous optimization triggers model collapse, with generated videos regressing towards the mean ($\hat{z}$). This phenomenon underscores a critical trade-off: low-level distortions, such as motion blur, are often entangled with high-frequency manifold information. 
Consequently, aggressively over-optimizing to penalize these blur artifacts forces the model to discard legitimate high-frequency information as well, ultimately breaking the generation manifold and driving the model into mean regression.

\begin{figure}[t]
    \centering

    \includegraphics[width=0.95\linewidth]{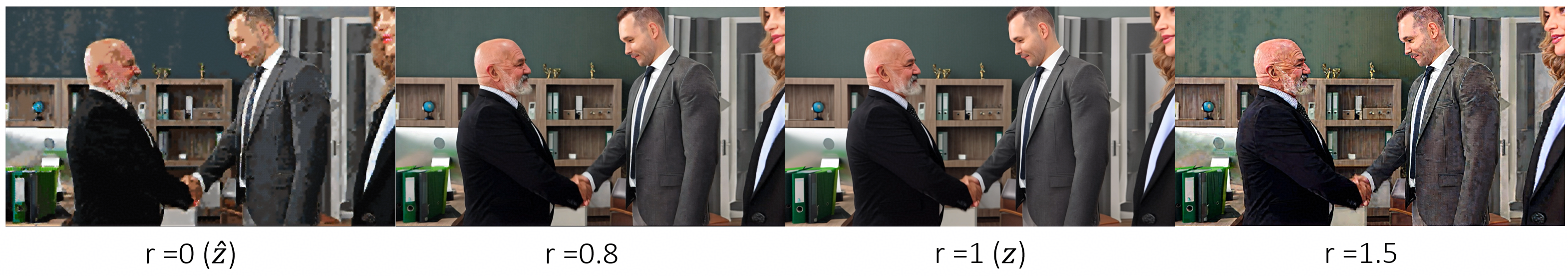}
    \caption{Radial reconstruction from $\hat{z}$. Videos transition from mean-like blur at $\hat{z}$ to sharper structures as the radius increases, while excessive deviation introduces distortions, revealing a shell-shaped latent manifold.}
    \label{fig:meanReg}
    \end{figure}

\textbf{Radial Reconstruction Visualization.}
To further investigate the geometric structure of the latent space, we perform a controlled radial reconstruction experiment. Starting from the LCC reconstruction $\hat z$ (the local convex combination), we sample latent codes along the outward radial direction: 
\[
z(r) = \hat z + r \cdot \frac{z - \hat z}{\|z - \hat z\|}.
\]

As shown in Fig.~\ref{fig:meanReg}, the decoded videos exhibit a clear three-stage transition. 
Reconstructions at $\hat z$ appear noticeably blurred and lack fine texture details, while moving outward along the radial direction gradually restores sharp structures and high-frequency components. 
However, when moving too far from $\hat z$, the videos become distorted again, although the artifacts differ from the mean-type blur observed at the center.
This radial behavior directly reveals the shell geometry of the latent manifold: the convex center corresponds to mean regression, while the realistic samples concentrate on a thin annular region surrounding it.

\begin{figure*}[t]
    \centering
    \includegraphics[width=\linewidth]{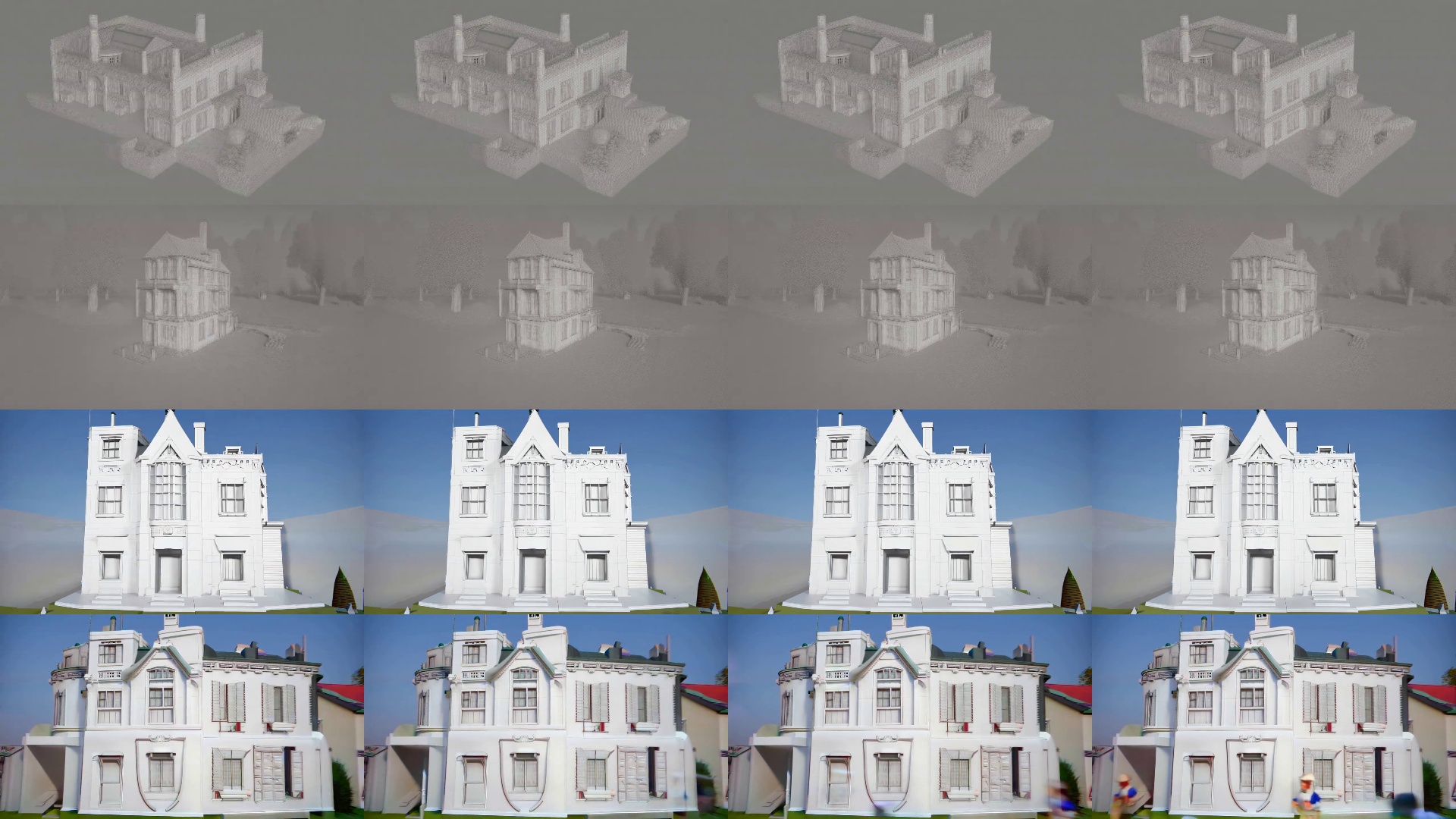}
    \caption{Qualitative comparison on open-source models, complementing the quantitative results in Tab.~\ref{tab:vbench}. From top to bottom: Wan-T2V-1.3B, Wan-T2V-1.3B~+~Shell-LCC, UltraWan-T2V-1.3B, and UltraWan-T2V-1.3B~+~Shell-LCC, for the prompt ``A 3D model of an 1800s Victorian house''. Adding Shell-LCC sharpens fine structures (e.g., window lattices and facade details) and suppresses the over-smoothing of the baselines, confirming that the geometric reward generalizes across open-source architectures.}
    \label{fig:open_source}
\end{figure*}

\section{Conclusion}
In this paper, we propose Shell-LCC, a novel, annotation-free geometric reward framework for text-to-video generation. By explicitly modeling the intrinsic SFT data manifold as an isotropic shell, we overcome the mean regression problem of standard approximations and establish a flexible point-to-surface alignment. Extensive experiments demonstrate that Shell-LCC preserves high-frequency structural details and significantly improves fine-grained imaging quality, achieving high-fidelity generation without trading off global aesthetics or relying on costly external reward models. The reward must nonetheless be optimized with care, since over-aggressive penalization can entangle genuine high-frequency details with low-level artifacts and re-induce mean regression (Fig.~\ref{fig:learning}), which we currently mitigate via early stopping. As our patch-level reward is orthogonal to preference-based methods such as DPO, combining the two and scaling Shell-LCC to larger backbones are promising directions for future work.

\bibliographystyle{splncs04}
\bibliography{main}

\end{document}